
\documentclass{article}

\usepackage{microtype}
\usepackage{graphicx}
\usepackage{subfigure}
\usepackage{booktabs} 

\usepackage{hyperref}


\usepackage{microtype}
\usepackage{graphicx}
\usepackage{subfigure}
\usepackage{booktabs} 
\usepackage{dsfont}
\usepackage{amsthm}
\usepackage{thmtools,thm-restate}
\usepackage[toc,page]{appendix}
\usepackage[none]{hyphenat}
\usepackage{amsfonts}
\usepackage{amsmath}

\usepackage{bbold}
\usepackage{algorithm}
\usepackage{algorithmic}
\usepackage{graphicx}
\usepackage{subfigure} 
\setlength{\parindent}{0pt}
\newcommand{\norm}[1]{\left\lVert#1\right\rVert}
\newcommand{\Thatn}[0]{\widehat{T}}

\newcommand{\That}[2]{\widehat{T}^{#1}(#2)}

\newcommand{\fancyS}[0]{\mathcal{S}}
\newcommand{\fancyA}[0]{\mathcal{A}}

\newcommand{\real}[0]{\mathbb R}

\newtheorem*{theorem*}{Theorem}
\newtheorem{defn}{Definition}

\usepackage[accepted,nohyperref]{icml2018}

\icmltitlerunning{Equivalence Between Wasserstein and Value-Aware Model-based Reinforcement Learning}

\begin{document}

\twocolumn[
\icmltitle{Equivalence Between Wasserstein and Value-Aware Loss for\\ Model-based Reinforcement Learning}



\icmlsetsymbol{equal}{*}

\begin{icmlauthorlist}
\icmlauthor{Kavosh Asadi}{br}
\icmlauthor{Evan Cater}{br}
\icmlauthor{Dipendra Misra}{co}
\icmlauthor{Michael L. Littman}{br}
\end{icmlauthorlist}

\icmlaffiliation{br}{Department of Computer Science, Brown University, Providence, USA}
\icmlaffiliation{co}{Department of Computer Science, Cornell Tech, New York, USA}
\icmlcorrespondingauthor{Kavosh asadi}{kavosh@brown.edu}

\icmlkeywords{Machine Learning, ICML}

\vskip 0.3in
]



\printAffiliationsAndNotice{}  

\begin{abstract}
Learning a generative model is a key component of model-based reinforcement learning. Though learning a good model in the tabular setting is a simple task, learning a useful model in the approximate setting is challenging. In this context, an important question is the loss function used for model learning as varying the loss function can have a remarkable impact on  effectiveness of planning. Recently \citet{farahmand17} proposed a value-aware model learning (VAML) objective that captures the structure of value function during model learning.  Using tools from \citet{asadi2018lipschitz}, we show that minimizing the VAML objective is in fact equivalent to minimizing the Wasserstein metric. This equivalence improves our understanding of value-aware models, and also creates a theoretical foundation for applications of Wasserstein in model-based reinforcement~learning.
\end{abstract}

\section{Introduction}
The model-based approach to reinforcement learning consists of learning an internal model of the environment and planning with the learned model \citep{sutton98}. The main promise of the model-based approach is data-efficiency: the ability to perform policy improvements with a relatively small number of environmental interactions.

Although the model-based approach is well-understood in the tabular case \citep{KaelblingLM96,sutton98}, the extension to approximate setting is difficult. Models usually have non-zero generalization error due to limited training samples. Moreover, the model learning problem can be unrelizable, leading to an imperfect model with irreducible error \citep{RossB12,Talvitie14}. Sometimes referred to as the compounding error phenomenon, it has been shown that such small modeling errors can also compound after multiple steps and degrade the policy learned using the model \citep{Talvitie14,venkatraman2015improving, asadi2018lipschitz}.

On way of addressing this problem is by learning a model that is tailored to the specific planning algorithm we intend to use. That is, even though the model is imperfect, it is useful for the planning algorithm that is going to leverage it. To this end, \citet{farahmand17} proposed an objective function for model-based RL that captures the structure of value function during model learning to ensure that the model is useful for Value Iteration. Learning a model using this loss, known as value-aware model learning (VAML) loss, empirically improved upon a model learned using maximum-likelihood objective, thus providing a promising direction for learning useful models in the approximate setting.

More specifically, VAML minimizes the maximum Bellman error given the learned model, MDP dynamics, and an arbitrary space of value functions. As we will show, computing the Wasserstein metric involves a similar maximization problem, but over a space of Lipschitz functions. Under certain assumptions, we prove that the value function of an MDP is Lipschitz. Therefore, minimizing the VAML objective is in fact equivalent to minimizing Wasserstein.
\label{submission}
\section{Background}
\subsection{MDPs}
We consider the Markov decision process (MDP) setting in which the RL problem is formulated by the tuple $\langle \mathcal {S}, \mathcal{A},R,T,\gamma\rangle
$. Here, $\mathcal{S}$ denotes a state space and $\mathcal{A}$ denotes an action set. The functions $R:S\times A\rightarrow\mathbb R$ and
$T\mathcal{:S\times\ A \rightarrow}\  \textrm{Pr}(\fancyS)$ denote the reward and transition dynamics. Finally $\gamma \in [0,1)$ is the
discount rate.
\subsection{Lipschitz Continuity}
We make use of the notion of ``smoothness'' of a function as quantified below.
\begin{defn}
Given two metric spaces $(M_1,d_1)$ and $(M_2,d_2)$ consisting of a space and a distance metric, a function $f:M_1\mapsto M_2$ is \emph{Lipschitz continuous} (sometimes simply Lipschitz) if the Lipschitz constant, defined as
\begin{equation}
	K_{d_1,d_2}(f):=\sup_{s_1\in \fancyS,s_2\in \fancyS}\frac{d_2\big(f(s_1),f(s_2)\big)}{d_1(s_1,s_2)}\ ,
\end{equation} 
is finite.
\end{defn}

Equivalently, for a Lipschitz $f$, $$\forall s_1,\forall s_2 \quad d_2\big(f(s_1),f(s_2)\big)\leq K_{d_1,d_2}(f)\ d_1(s_1,s_2) \ \ . $$

Note that the input and output of $f$ can generally be scalars, vectors, or probability distributions. A Lipschitz function $f$ is called a \emph{non-expansion} when $K_{d_1,d_2}(f)=1$ and a \emph{contraction} when $K_{d_1,d_2}(f)< 1$. We also define Lipschitz continuity over a subset of inputs:
\begin{defn}
A function $f:M_1 \times \fancyA \mapsto M_2$ is \emph{uniformly Lipschitz continuous} in $\fancyA$ if  \begin{equation}
	K_{d_1,d_2}^\fancyA(f):=\sup_{a\in \fancyA}\sup_{s_1,s_2}\frac{d_2\big(f(s_1,a),f(s_2,a)\big)}{d_1(s_1,s_2)}\ ,
\end{equation}
is finite.
\end{defn}	
Note that the metric $d_1$ is still defined only on $M_1$.
Below we also present two useful lemmas.
\begin{restatable}{lemma}{compositionlemma}
(Composition Lemma) Define three metric spaces $(M_1,d_1)$, $(M_2,d_2)$, and $(M_3,d_3)$. Define Lipschitz functions $f:M_2\mapsto M_3$ and $g:M_1\mapsto M_2$ with constants $K_{d_2,d_3}(f)$ and $K_{d_1,d_2}(g)$. Then, $h:f \circ g:M_1 \mapsto M_3$ is Lipschitz with constant $K_{d_1,d_3}(h)\leq K_{d_2,d_3}(f) K_{d_1,d_2}(g)$.
\label{lemma_Lipschitz_Composition}
\end{restatable}
\begin{proof}
	\begin{eqnarray*}
			&&K_{d_1,d_3}(h)\\
			&=&\sup_{s1,s_2}\frac{d_{3}\Big(f\big(g(s_1)\big),f\big(g(s_2)\big)\Big)}{d_{1}(s_1,s_2)}\\
			&=&\sup_{s_1,s_2}\frac{d_{2}\big(g(s_1),g(s_2)\big)}{d_{1}(s_1,s_2)}\frac{d_{3}\Big(f\big(g(s_1)\big),f\big(g(s_2)\big)\Big)}{d_{2}\big(g(s_1),g(s_2)\big)}\\
			&\leq&\sup_{s_1,s_2}\frac{d_{2}\big(g(s_1),g(s_2)\big)}{d_{1}(s_1,s_2)}
			\sup_{s_1,s_2}\frac{d_{3}\big(f(s_1),f(s_2)\big)}{d_{2}(s_1,s_2)}\\
			&=&K_{d_1,d_2}(g)K_{d_2,d_3}(f).
	\end{eqnarray*}
\end{proof}
\begin{restatable}{lemma}{summationlemma}
(Summation Lemma) Define two vector spaces $(M_1,\norm{})$ and $(M_2,\norm{})$. Define Lipschitz functions $f:M_1\mapsto M_2$ and $g:M_1\mapsto M_2$ with constants $K_{\norm{},\norm{}}(f)$ and $K_{\norm{},\norm{}}(g)$. Then, $h:f + g:M_1 \mapsto M_2$ is Lipschitz with constant $K_{\norm{},\norm{}}(h)\leq K_{\norm{},\norm{}}(f)+K_{\norm{},\norm{}}(g)$.
\label{lemma_Lipschitz_summation}
\end{restatable}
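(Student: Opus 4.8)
The plan is to bound the numerator appearing in the definition of $K_{\norm{},\norm{}}(h)$ using the triangle inequality in $M_2$, and then pass to the supremum. First I would fix two arbitrary points $s_1, s_2 \in M_1$ with $s_1 \neq s_2$ and use the fact that $M_2$ is a vector space and $h = f+g$ is defined pointwise to write $h(s_1) - h(s_2) = \big(f(s_1) - f(s_2)\big) + \big(g(s_1) - g(s_2)\big)$.

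Next I would apply the triangle inequality for the norm on $M_2$, obtaining $\norm{h(s_1) - h(s_2)} \leq \norm{f(s_1) - f(s_2)} + \norm{g(s_1) - g(s_2)}$. Invoking the Lipschitz property of $f$ and of $g$ individually, i.e. $\norm{f(s_1) - f(s_2)} \leq K_{\norm{},\norm{}}(f)\,\norm{s_1 - s_2}$ and likewise for $g$, the right-hand side is at most $\big(K_{\norm{},\norm{}}(f) + K_{\norm{},\norm{}}(g)\big)\,\norm{s_1 - s_2}$.

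Finally I would divide both sides by $\norm{s_1 - s_2} > 0$ and take the supremum over all $s_1, s_2$; since the bound $K_{\norm{},\norm{}}(f) + K_{\norm{},\norm{}}(g)$ is independent of the chosen points, it survives the supremum, yielding $K_{\norm{},\norm{}}(h) \leq K_{\norm{},\norm{}}(f) + K_{\norm{},\norm{}}(g)$, and in particular $h$ is Lipschitz because both terms on the right are finite by hypothesis. There is essentially no real obstacle here; the only point requiring a little care is that the norms on $M_1$ and $M_2$ may differ, so the triangle inequality must be the one on $M_2$ while the Lipschitz bounds are what convert an $M_2$-distance into a multiple of an $M_1$-distance. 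The statement's shared notation $\norm{}$ for both norms slightly obscures this but does not affect the argument.
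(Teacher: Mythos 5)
Your proposal is correct and follows essentially the same route as the paper's proof: a triangle inequality on $\norm{h(s_1)-h(s_2)}$ followed by passing to the supremum (the paper writes this as ``sup of a sum is at most the sum of sups,'' which is equivalent to your pointwise bound). No meaningful difference in approach.
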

\begin{proof}
	\begin{eqnarray*}
			K_{d_1,d_2}(h)&:=&\sup_{s_1,s_2}\frac{\norm{f(s_2)+g(s_2)-f(s_1)-g(s_1)}}{\norm{s_2-s_1}}\\
			&\leq&\sup_{s_1,s_2}\frac{\norm{f(s_2)-f(s_1)}}{\norm{s_2-s_1}}+\frac{\norm{g(s_2)-g(s_1)}}{\norm{s_2-s_1}}\\
			&\leq&\sup_{s_1,s_2}\frac{\norm{f(s_2)-f(s_1)}}{\norm{s_2-s_1}}\\
			&+&\sup_{s_1,s_2}\frac{\norm{g(s_2)-g(s_1)}}{\norm{s_2-s_1}}\\
			&=& K_{\norm{}{},\norm{}}(f)+K_{\norm{},\norm{}}(g)\\
	\end{eqnarray*}
\end{proof}
\subsection{Distance Between Distributions}
We require a notion of difference between two distributions quantified below.
\begin{defn}
Given a metric space $(M,d)$ and the set $\mathbb P(M)$ of all probability measures on $M$, the \emph{Wasserstein metric} (or the 1st Kantorovic metric) between two probability distributions $\mu_1$ and $\mu_2$ in $\mathbb P(M)$ is defined as
\begin{equation}
	W(\mu_1,\mu_2):=\inf_{j \in \Lambda}\int\!\int j(s_1,s_2) d(s_1,s_2)ds_2\ ds_1 \ ,
\end{equation}
where $\Lambda$ denotes the collection of all joint distributions $j$ on $M \times M$ with marginals $\mu_1$ and $\mu_2$~\citep{vaserstein1969markov}.
\end{defn}
Wasserstein is linked to Lipschitz continuity using duality:
\begin{equation}
	W(\mu_1,\mu_2)=\sup_{f:K_{d,d_{\real}}(f)\leq 1}\int\! \big(\mu_{1}(s)-\mu_{2}(s)\big)f(s)ds\ .
\label{KR_duality}
\end{equation}
This equivalence is known as Kantorovich-Rubinstein duality \citep{kantorovich1958space,villani2008optimal}. Sometimes referred to as ``Earth Mover's distance'', Wasserstein has recently become popular in machine learning, namely in the context of generative adversarial networks~\citep{arjovsky2017wasserstein} and value distributions in reinforcement learning~\citep{bellemare2017distributional}. We also define Kullback Leibler divergence (simply KL) as an alternative measure of difference between two distributions:
$$KL(\mu_1 \mid\mid \mu_2) := \int \mu_1(s) \log \frac{\mu_{1}(s)}{\mu_2(s)}ds\ .$$ 

\section{Value-Aware Model Learning (VAML) Loss}
\label{VAML_section}
The basic idea behind VAML \citep{farahmand17} is to learn a model tailored to the planning algorithm that intends to use it. Since Bellman equations \citep{bellman1957markovian} are in the core of many RL algorithms \citep{sutton98}, we assume that the planner uses the following Bellman equation:
$$Q(s,a)= R(s,a)+\gamma\int T(s'|s,a) f\big(Q(s',.)\big)ds' \ ,$$
where $f$ can generally be any arbitrary operator \citep{littman1996generalized} such as max. We also define:
$$v(s'):=f\big(Q(s',.)\big)\ .$$ 
A good model $\widehat{T}$ could then be thought of as the one that minimizes the error:
\begin{eqnarray*}
    l(T, \widehat T)(s, a)&=&R(s,a)+\gamma\int T(s'|s,a) v(s')ds' \nonumber\\
    &-& R(s,a)-\gamma\int \widehat T(s'|s,a) v(s')ds'\nonumber \\
    &=&\!\gamma\!\int\!\big( T(s'|s,a)\!-\!\That{}{s'|s,a}\big) v(s')ds'
    \label{original_obj}
\end{eqnarray*}

Note that minimizing this objective requires access to the value function in the first place, but we can obviate this need by leveraging Holder's inequality:
\begin{eqnarray*} 
l(\widehat T, T)(s, a) &=&\gamma \int  \big(T(s'|s,a)-\That{}{s'|s,a} \big) v(s')ds' \\ 
&\leq& \gamma \norm{T(s'|s,a)-\That{}{s'|s,a}}_{1}\norm{v}_{\infty}
\end{eqnarray*}
Further, we can use Pinsker's inequality to write:
$$\norm{T(\cdot|s,a)-\That{}{\cdot|s,a}}_{1}\leq \sqrt{2KL\big(T(\cdot|s,a)||\That{}{\cdot|s,a}\big)}\ .$$
This justifies the use of maximum likelihood estimation for model learning, a common practice in model-based RL \cite{bagnell2001autonomous,abbeel2006using,agostini2010reinforcement}, since maximum likelihood estimation is equivalent to empirical KL minimization.

However, there exists a major drawback with the KL objective, namely that it ignores the structure of the value function during model learning. As a simple example, if the value function is constant through the state-space, any randomly chosen model $\widehat T$ will, in fact, yield zero Bellman error. However, a model learning algorithm that ignores the structure of value function can potentially require many samples to provide any guarantee about the performance of learned policy.

Consider the objective function $l(T,\Thatn)$, and notice again that $v$ itself is not known so we cannot directly optimize for this objective. \citet{farahmand17} proposed to search for a model that results in lowest error given all possible value functions belonging to a specific class: 
\begin{equation}L(T,\hat T)(s,a)\!=\!\sup_{v\in \mathcal{F}}\!\Big|\!\int\!\big(\!T(s'\mid s,a)-\Thatn(s'\mid s,a)\big)v(s') ds' \Big|^2
\label{vaml_tractable}
\end{equation}
Note that minimizing this objective is shown to be tractable if, for example, $\mathcal{F}$ is restricted to the class of exponential functions. Observe that the VAML objective (\ref{vaml_tractable}) is similar to the dual of Wasserstein (\ref{KR_duality}), but the main difference is the space of value functions. In the next section we show that even the space of value functions are the same under certain conditions.
\section{Lipschitz Generalized Value Iteration}
 We show that solving for a class of Bellman equations yields a Lipschitz value function. Our proof is in the context of GVI \citep{littman1996generalized}, which defines Value Iteration \citep{bellman1957markovian} with arbitrary backup operators. We make use of the following lemmas.

\begin{restatable}{lemma}{mullerLemma}
	Given a non-expansion $f:\fancyS\mapsto \mathbb R$:
	$$K_{d_\fancyS,d_{\real}}^\fancyA \big(\int T(s'|s,a)f(s')ds'\big)\leq K^{\fancyA}_{d_\fancyS,W}\big(T\big)\ .$$
\label{lipschitz_transition}
\end{restatable}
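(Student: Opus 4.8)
The plan is to fix an action $a\in\fancyA$ and two states $s_1,s_2\in\fancyS$, and to bound the one-step quantity
$$\Big|\int T(s'|s_1,a)f(s')ds'-\int T(s'|s_2,a)f(s')ds'\Big|=\Big|\int\big(T(s'|s_1,a)-T(s'|s_2,a)\big)f(s')ds'\Big|$$
directly by the Wasserstein distance $W\big(T(\cdot|s_1,a),T(\cdot|s_2,a)\big)$. The key tool is the Kantorovich--Rubinstein duality~(\ref{KR_duality}): since $f$ is a non-expansion, i.e. $K_{d_\fancyS,d_{\real}}(f)\leq 1$, it is one of the functions over which the supremum in~(\ref{KR_duality}) is taken, so $\int\big(T(s'|s_1,a)-T(s'|s_2,a)\big)f(s')ds'\leq W\big(T(\cdot|s_1,a),T(\cdot|s_2,a)\big)$.

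To deal with the absolute value I would observe that $-f$ is also a non-expansion, so applying the same bound to $-f$ gives the matching lower bound; hence $\big|\int\big(T(s'|s_1,a)-T(s'|s_2,a)\big)f(s')ds'\big|\leq W\big(T(\cdot|s_1,a),T(\cdot|s_2,a)\big)$. Dividing both sides by $d_\fancyS(s_1,s_2)$ and noting that the right-hand side is exactly the ratio appearing in the definition of the uniform Lipschitz constant of $T$ viewed as a map $\fancyS\times\fancyA\mapsto\big(\mathbb P(\fancyS),W\big)$, we obtain
$$\frac{\big|\int\big(T(s'|s_1,a)-T(s'|s_2,a)\big)f(s')ds'\big|}{d_\fancyS(s_1,s_2)}\leq\frac{W\big(T(\cdot|s_1,a),T(\cdot|s_2,a)\big)}{d_\fancyS(s_1,s_2)}\leq K^{\fancyA}_{d_\fancyS,W}(T).$$
Taking the supremum over $s_1,s_2$ and then over $a$ on the left-hand side yields the claimed inequality, since the left-hand side is precisely $K^{\fancyA}_{d_\fancyS,d_{\real}}\big(\int T(s'|s,a)f(s')ds'\big)$.

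I expect the only delicate point to be bookkeeping rather than substance: one must ensure that $s\mapsto\int T(s'|s,a)f(s')ds'$ is a well-defined finite real-valued function (which holds under the boundedness assumptions the paper invokes elsewhere, or whenever $f$ is bounded), and that the supremum in~(\ref{KR_duality}) legitimately ranges over $f$ and $-f$. There is no fixed-point or compounding-error argument here; this is a pure one-step estimate, and the subsequent Lipschitz generalized value iteration result will combine it with the Composition Lemma (Lemma~\ref{lemma_Lipschitz_Composition}) and the Summation Lemma (Lemma~\ref{lemma_Lipschitz_summation}).
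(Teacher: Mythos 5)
Your proposal is correct and follows essentially the same route as the paper's proof: bound the one-step difference $\big|\int\big(T(s'|s_1,a)-T(s'|s_2,a)\big)f(s')ds'\big|$ by the supremum over non-expansions and invoke Kantorovich--Rubinstein duality to identify that supremum with $W\big(T(\cdot|s_1,a),T(\cdot|s_2,a)\big)$, then divide by $d_\fancyS(s_1,s_2)$ and take suprema. Your explicit handling of the absolute value via $-f$ is a small tidiness improvement over the paper's write-up, but not a different argument.
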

\begin{proof}
Starting from the definition, we write:

$\begin{aligned}
	&K_{d_\fancyS,d_{\real}}^\fancyA \big(\int T(s'|s,a)f(s')ds'\big)\\
	&=\sup_{a}\sup_{s_1,s_2}\frac{\Big|\int \big( T(s'|s_1,a)- T(s'|s_2,a)\big)f(s')ds'\Big|}{d(s_1,s_2)}\\
	&\leq\sup_{a}\sup_{s_1,s_2}\frac{\Big|\sup_{g}\int\! \big( T(s'|s_1,a)- T(s'|s_2,a)\big)g(s')ds'\Big|}{d(s_1,s_2)}\\
	&(\textrm{where } K_{d_\fancyS,d_\real}(g)\leq 1)\\
	&=\sup_{a}\sup_{s_1,s_2}\frac{\sup_{g}\int \big( T(s'|s_1,a)- T(s'|s_2,a)\big)g(s')ds'}{d(s_1,s_2)}\\
	&=\sup_{a}\sup_{s_1,s_2}\frac{W\big( T(\cdot|s_1,a), T(\cdot|s_2,a)\big)}{d(s_1,s_2)}= K_{d_{\fancyS},W}^{\fancyA}(T) \ .
\end{aligned}$
\end{proof}
\begin{restatable}{lemma}{LipschitzOperators}
	The following operators are non-expansion ($K_{\norm{\cdot}_{\infty},d_{R}}(\cdot)=1$):
	\begin{enumerate}
		\item $\textrm{max}(x),\ \textrm{mean}(x)$
		\item $\epsilon$-$greedy(x):=\epsilon\ \textrm{mean}(x) +(1-\epsilon) \textrm{max}(x)$ 
		\item $mm_\beta(x):=\frac{\log\frac{\sum_{i}e^{\beta x_i}}{n}}{\beta}$
	\end{enumerate}
	\label{operators_Lipschitzness}
\end{restatable}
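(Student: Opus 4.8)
The plan is to verify the non-expansion property separately for each operator listed, and then note that in every case the Lipschitz constant is exactly $1$ by a single common argument. For the ``$=1$'' (as opposed to ``$<1$'') part, fix any $x$ and take $y=x+c\,\mathbf 1$ for a scalar $c$, where $\mathbf 1$ is the all-ones vector; each of these operators satisfies $\mathrm{op}(x+c\,\mathbf 1)=\mathrm{op}(x)+c$ (for $mm_\beta$ this is because the factor $e^{\beta c}$ pulls out of the sum inside the $\log$), so the defining ratio equals $|c|/\norm{c\,\mathbf 1}_\infty=1$. Hence it suffices to prove each operator is a non-expansion, i.e. $|\mathrm{op}(x)-\mathrm{op}(y)|\le\norm{x-y}_\infty$.

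For $\max$: let $i^\star\in\arg\max_i x_i$. Then $\max_i x_i-\max_i y_i\le x_{i^\star}-y_{i^\star}\le\norm{x-y}_\infty$, and swapping the roles of $x$ and $y$ gives the reverse inequality, so $|\max(x)-\max(y)|\le\norm{x-y}_\infty$. For $\mathrm{mean}$: by the triangle inequality $|\mathrm{mean}(x)-\mathrm{mean}(y)|=\frac1n\big|\sum_i(x_i-y_i)\big|\le\frac1n\sum_i|x_i-y_i|\le\norm{x-y}_\infty$. For $\epsilon$-$greedy$, which is the convex combination $\epsilon\,\mathrm{mean}+(1-\epsilon)\max$, I would either invoke the Summation Lemma (Lemma~\ref{lemma_Lipschitz_summation}) applied to $\epsilon\,\mathrm{mean}$ and $(1-\epsilon)\max$, which have Lipschitz constants $\epsilon$ and $1-\epsilon$, or simply bound $|\epsilon(\mathrm{mean}(x)-\mathrm{mean}(y))+(1-\epsilon)(\max(x)-\max(y))|\le\epsilon\norm{x-y}_\infty+(1-\epsilon)\norm{x-y}_\infty=\norm{x-y}_\infty$.

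The one genuinely nontrivial case is $mm_\beta$, and this is where I expect the main effort. The cleanest route is a gradient bound: $mm_\beta$ is smooth, with $\frac{\partial}{\partial x_j}mm_\beta(x)=\frac{e^{\beta x_j}}{\sum_i e^{\beta x_i}}$, so $\nabla mm_\beta(x)$ always lies in the probability simplex; in particular $\norm{\nabla mm_\beta(x)}_1=1$ for every $x$. Applying the mean value theorem along the segment from $y$ to $x$ gives a point $\xi$ on that segment with $mm_\beta(x)-mm_\beta(y)=\langle\nabla mm_\beta(\xi),x-y\rangle$, and then H\"older's inequality yields $|mm_\beta(x)-mm_\beta(y)|\le\norm{\nabla mm_\beta(\xi)}_1\norm{x-y}_\infty=\norm{x-y}_\infty$. (Alternatively, one can cite the corresponding mellowmax non-expansion result in \citet{asadi2018lipschitz}.) Combining the four cases with the lower bound established above gives $K_{\norm{\cdot}_\infty,d_\real}(\cdot)=1$ for each operator, which is what we want; the only delicate point to keep straight is that differentiability is used only for $mm_\beta$ (where it holds because the argument of the logarithm is strictly positive), while $\max$ is handled by the elementary coordinate argument above.
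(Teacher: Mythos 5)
Your proof is correct, and it is genuinely more self-contained than the paper's. The paper dispatches this lemma almost entirely by citation: part 1 ($\max$, $\textrm{mean}$) is attributed to \citet{littman1996generalized}, part 3 ($mm_\beta$) to prior mellowmax results, and only part 2 is actually argued, via exactly the convex-combination bound $K(\epsilon\,\textrm{mean}+(1-\epsilon)\max)\leq \epsilon K(\textrm{mean})+(1-\epsilon)K(\max)=1$ that you also give. You instead prove everything from first principles: the $\arg\max$ coordinate argument for $\max$, the triangle inequality for $\textrm{mean}$, and the gradient-in-the-simplex argument (mean value theorem plus H\"older with the $\ell_1$/$\ell_\infty$ pairing) for $mm_\beta$ --- all of which are sound, with the mild caveat that for the ``$\leq$'' direction of the $\max$ bound you should note $\max_i y_i \geq y_{i^\star}$ to justify the first inequality. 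You also add something the paper's proof omits entirely: the statement claims the constant \emph{equals} $1$, and your shift argument $\mathrm{op}(x+c\,\mathbf 1)=\mathrm{op}(x)+c$ is what actually certifies the lower bound, so your proof is strictly more complete with respect to what the lemma asserts. The trade-off is length versus transparency: the paper's citations keep the exposition short, while your version makes the lemma verifiable without chasing references.
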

\begin{proof}
1 is proven by \citet{littman1996generalized}. 2 follows from 1: (metrics not shown for brevity)
\begin{eqnarray*}
K(\epsilon\textrm{-}\textrm{greedy(x)})&=&K\big(\epsilon\ \textrm{mean}(x) +(1-\epsilon)\textrm{max}(x)\big)\\
&\leq&\epsilon K\big( \textrm{mean}(x)\big) +(1-\epsilon)K\big(\textrm{max}(x)\big)\\
&=&1
\end{eqnarray*}
Finally, 3 is proven multiple times in the literature. \citep{mellowmax,nachum2017bridging,neu2017unified}
\end{proof}

\begin{algorithm}
\caption{GVI algorithm}
\begin{algorithmic}
   \STATE {\bfseries Input:} initial $\widehat Q(s,a)$, $\delta$, and choose an operator $f$
   \REPEAT 
   \STATE $\textrm{diff} \leftarrow 0$
   \FOR{each $s\in \mathcal{S}$}
   \FOR{each $a\in \mathcal{A}$}
   \STATE $Q_{copy}\leftarrow \widehat Q(s,a)$
   \STATE $\widehat Q(s,a)\! \leftarrow\! R(s,a)\!+\!\gamma\!\int T(s'\mid s,a) f\big(\widehat Q(s',\cdot)\big)ds'$ 
   \STATE $\textrm{diff} \leftarrow \max\big\{\textrm{diff},|Q_{copy}- Q(s,a)|\big\}$
   \ENDFOR
   \ENDFOR
   \UNTIL{$\textrm{diff}<\delta$}
\end{algorithmic}
\label{GVI}
\end{algorithm}

We now present the main result of this paper.

\noindent\fbox{%
    \parbox{.47\textwidth}{%
        \begin{theorem*}
	For any choice of backup operator $f$ outlined in Lemma~\ref{operators_Lipschitzness}, GVI computes a value function with a Lipschitz constant bounded by $\frac{K^{\fancyA}_{d_{\fancyS},d_{R}}(R)}{1-\gamma K_{d_{\fancyS},W}( T)}\ $ if $\gamma K_{d_{\fancyS},W}^{\fancyA}( T)< 1$.
	\label{theorem_lipschitz_q}
\end{theorem*}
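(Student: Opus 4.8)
The plan is to run the GVI iteration and track how the Lipschitz modulus of the iterate evolves: this yields an affine recursion in that modulus whose fixed point is exactly the claimed bound. Let $\widehat{Q}_0,\widehat{Q}_1,\widehat{Q}_2,\dots$ denote the iterates produced by the inner loop of Algorithm~\ref{GVI}, i.e.
\begin{equation*}
\widehat{Q}_{n+1}(s,a)=R(s,a)+\gamma\int T(s'\mid s,a)\,f\big(\widehat{Q}_n(s',\cdot)\big)\,ds'.
\end{equation*}
Because every operator $f$ in Lemma~\ref{operators_Lipschitzness} is a non-expansion in $\norm{\cdot}_{\infty}$, this update is a $\gamma$-contraction in the sup-norm over $(s,a)$, so the iterates converge uniformly to a unique fixed point $\widehat{Q}_\infty$ that is independent of the initialization; this $\widehat{Q}_\infty$ is the value function GVI computes. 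Since the limit does not depend on the starting point, I will run the recursion from $\widehat{Q}_0\equiv 0$, which is Lipschitz in $s$ with constant $0$, uniformly in $\fancyA$.

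Write $K_n:=K^{\fancyA}_{d_{\fancyS},d_R}(\widehat{Q}_n)$. Fix an action $a$ and set $v_n(s'):=f\big(\widehat{Q}_n(s',\cdot)\big)$. Using the non-expansion property of $f$ (Lemma~\ref{operators_Lipschitzness}), or equivalently the Composition Lemma applied to $s'\mapsto\widehat{Q}_n(s',\cdot)$ followed by $f$,
\begin{equation*}
\big|v_n(s_1)-v_n(s_2)\big|\le\norm{\widehat{Q}_n(s_1,\cdot)-\widehat{Q}_n(s_2,\cdot)}_{\infty}\le K_n\,d_{\fancyS}(s_1,s_2),
\end{equation*}
so $v_n$ is Lipschitz with constant at most $K_n$. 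Applying Lemma~\ref{lipschitz_transition} to the rescaled non-expansion $v_n/K_n$ together with linearity of the integral (the case $K_n=0$ being trivial, since then $v_n$ and hence the integral are constant in $s$), the map $s\mapsto\int T(s'\mid s,a)\,v_n(s')\,ds'$ is Lipschitz with constant at most $K_n\,K^{\fancyA}_{d_{\fancyS},W}(T)$, uniformly in $a$. Adding the reward term $R(\cdot,a)$ via the Summation Lemma (whose proof works verbatim for a metric, not just normed, domain) gives the recursion
\begin{equation*}
K_{n+1}\;\le\;K^{\fancyA}_{d_{\fancyS},d_R}(R)+\gamma\,K^{\fancyA}_{d_{\fancyS},W}(T)\,K_n.
\end{equation*}

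Let $b:=K^{\fancyA}_{d_{\fancyS},d_R}(R)$ and $c:=\gamma K^{\fancyA}_{d_{\fancyS},W}(T)$, so $c<1$ by hypothesis. Unrolling from $K_0=0$ gives $K_n\le b\sum_{i=0}^{n-1}c^i\le\frac{b}{1-c}$ for all $n$, so every iterate $\widehat{Q}_n$ is $\frac{b}{1-c}$-Lipschitz in $s$, uniformly in $\fancyA$. Since a pointwise (here, uniform) limit of uniformly $L$-Lipschitz functions is $L$-Lipschitz, the limit $\widehat{Q}_\infty$ satisfies $K^{\fancyA}_{d_{\fancyS},d_R}(\widehat{Q}_\infty)\le\frac{b}{1-c}=\frac{K^{\fancyA}_{d_{\fancyS},d_R}(R)}{1-\gamma K^{\fancyA}_{d_{\fancyS},W}(T)}$, which is the stated bound.

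I expect the main obstacle to be the limiting step rather than the per-step algebra: one must justify that GVI actually converges and that the uniform Lipschitz bound survives in the limit, which is precisely where the non-expansion property of $f$ (giving a $\gamma$-contraction in sup-norm) is used. A related subtlety is the initialization: if one keeps the arbitrary $\widehat{Q}_0$ from the algorithm, a single update need not be Lipschitz, so the clean argument relies on the fixed point being initialization-independent (or, alternatively, on combining $\limsup_n K_n\le b/(1-c)$ with the same limiting argument). The remaining point to handle with care is the rescaling in the application of Lemma~\ref{lipschitz_transition}, which is stated only for non-expansions, together with the degenerate case $K_n=0$.
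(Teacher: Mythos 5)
Your proof is correct and follows essentially the same route as the paper: the same affine recursion $K_{n+1}\le K^{\fancyA}_{d_{\fancyS},d_R}(R)+\gamma K^{\fancyA}_{d_{\fancyS},W}(T)\,K_n$ obtained from the Summation Lemma, Lemma~\ref{lipschitz_transition}, and the non-expansion of $f$, then unrolled as a geometric series. The only differences are cosmetic refinements (initializing at $\widehat{Q}_0\equiv 0$ rather than carrying the $(\gamma K^{\fancyA}_{d_{\fancyS},W}(T))^n K^{\fancyA}_{d_{\fancyS},d_\real}(\widehat{Q}_0)$ term, and explicitly justifying that the uniform Lipschitz bound passes to the limit), which tidy up steps the paper leaves implicit.
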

    }%
}
\begin{proof}
From Algorithm~\ref{GVI}, in the $n$th round of GVI updates we have:
	$$\widehat Q_{n+1}(s,a) \leftarrow R(s,a)+\gamma\int T(s'\mid s,a) f\big(\widehat Q_{n}(s',\cdot)\big) ds'.$$
	First observe that:
	\begin{eqnarray*}
	&&\!K^{\fancyA}_{d_{\fancyS},d_{R}}(\widehat Q_{n+1})\\
	    && \textrm{\big(due to Summation Lemma (\ref{lemma_Lipschitz_summation})\big)}\\
		&&\leq \! K^{\fancyA}_{d_{\fancyS},d_{R}}\!(R)\!+\!\gamma K_{d_{\fancyS},d_\real}^{\fancyA}\!\big(\!\int\!T(s'\mid s,a)\! f\big( \widehat Q_{n}(s',\cdot)\big)ds'\big) \\
		&& \textrm{\big(due to Lemma (\ref{lipschitz_transition})\big)}\\
		&&\leq K^{\fancyA}_{d_{\fancyS},d_{R}}(R)+\gamma K_{d_{\fancyS},W}^{\fancyA}( T)\ K_{d_{\fancyS,\real}}\Big(f\big( \widehat Q_{n}(s,\cdot)\big)\Big)\\
		&&\textrm{\big(due to Composition Lemma (\ref{lemma_Lipschitz_Composition})\big)}\\
		&&\leq
		K^{\fancyA}_{d_{\fancyS},d_{R}}(R)+\gamma K_{d_{\fancyS},W}^{\fancyA}( T) K_{\norm{\cdot}_{\infty},d_{\real}}(f) K^{\fancyA}_{d_{\fancyS},d_\real}(\widehat Q_{n})\\
		&& \textrm{\big(due to Lemma (\ref{operators_Lipschitzness}), the non-expansion property of $f$\big)}\\
		&&=
		K^{\fancyA}_{d_{\fancyS},d_{R}}(R)+\gamma K_{d_{\fancyS},W}^{\fancyA}( T) K^{\fancyA}_{d_{\fancyS},d_\real}(\widehat Q_{n})
	\end{eqnarray*}
Equivalently:
\begin{eqnarray*}
K^{\fancyA}_{d_{\fancyS},d_{R}}(\widehat Q_{n+1})&\leq&
K^{\fancyA}_{d_{\fancyS},d_{\real}}(R)\!\sum_{i=0}^{n}\!\big(\gamma K_{d_{\fancyS},W}^{\fancyA}( T)\big)^i\\
&+&\big(\gamma K_{d_{\fancyS},W}^{\fancyA}(T)\big)^n \ K^{\fancyA}_{d_{\fancyS},d_{\real}}(\widehat Q_{0}) \ .
\end{eqnarray*}
By computing the limit of both sides, we get:
\begin{eqnarray*}\lim_{n\rightarrow\infty}K^{\fancyA}_{d_{\fancyS},d_{\real}}(\widehat Q_{n})\!&\!\leq\!&\!\lim_{n\rightarrow\infty}\! K^{\fancyA}_{d_{\fancyS},d_{\real}}(R)\!\sum_{i=0}^{n}\!\big(\gamma K_{d_{\fancyS},W}^{\fancyA}( T)\big)^i\\
&+&\lim_{n\rightarrow\infty}\big(\gamma K_{d_{\fancyS},W}^{\fancyA}(T)\big)^n \ K^{\fancyA}_{d_{\fancyS},d_{\real}}(\widehat Q_{0})\\
&=&\frac{K^{\fancyA}_{d_{\fancyS},d_{R}}(R)}{1-\gamma K_{d_{\fancyS},W}( T)} + 0 \ ,
\end{eqnarray*}
where we used the fact that
$$  \lim_{n\rightarrow\infty}\big(\gamma K_{d_{\fancyS},W}^{\fancyA}(T)\big)^n=0 \ .$$
This concludes the proof.\end{proof}
Now notice that as defined earlier:
$$\widehat V_{n}(s):=f\big(\widehat Q_{n}(s,\cdot)\big)\ ,$$ so as a relevant corollary of our theorem we get:
\begin{eqnarray*}
K_{d_{\fancyS},d_{\real}}\big(v(s)\big)&=&\lim_{n\rightarrow\infty} K_{d_{\fancyS},d_{\real}}(\widehat V_{n})\\
&=&\lim_{n\rightarrow\infty}K_{d_{\fancyS},d_{\real}}\Big(f\big(\widehat Q_{n}(s,\cdot)\big)\Big)\\
&\leq& \lim_{n\rightarrow\infty}K^{\fancyA}_{d_{\fancyS},d_{\real}}(\widehat Q_{n})\\
&\leq& \frac{K^{\fancyA}_{d_{\fancyS},d_{R}}(R)}{1-\gamma K_{d_{\fancyS},W}( T)} \ .
\end{eqnarray*}
That is, solving for the fixed point of this general class of Bellman equations results in a Lipschitz state-value function.
\section{Equivalence Between VAML and Wasserstein }
We now show the main claim of the paper, namely that minimzing for the VAML objective is the same as minimizing the Wasserstein metric. 

Consider again the VAML objective:
$$L(T,\hat T)(s,a)\!=\!\sup_{v\in \mathcal{F}}\Big|\!\int\!\big(T(s'\mid s,a)-\Thatn(s'\mid s,a)\big)v(s') ds' \Big|^2$$
where $\mathcal{F}$ can generally be any class of functions. From our theorem, however, the space of value functions $\mathcal{F}$ should be restricted to Lipschitz functions. Moreover, it is easy to design an MDP and a policy such that a desired Lipschitz value function is attained.

This space $\mathcal{L}_C$ can then be defined as follows:
$$\mathcal{L}_C=\{f:K_{d_{\fancyS},d_{\real}}(f)\leq C\}\ ,$$
where $$C=\frac{K^{\fancyA}_{d_{\fancyS},d_{R}}(R)}{1-\gamma K_{d_{\fancyS},W}( T)}\ . $$
So we can rewrite the VAML objective $L$ as follows:

$\begin{aligned}
&L\big(T,\hat T\big)(s,a)\!=\!\sup_{f\in \mathcal{L}_C}\!\Big|\!\int\!f(s)\big(T(s'\mid s,a)\!-\!\widehat T(s'\mid s,a)\big)\!ds' \Big|^2\\
&=\sup_{f\in \mathcal{L}_C}\Big|\!\int\!C\frac{f(s)}{C}\big(T(s'\mid s,a)\!-\!\widehat T(s'\mid s,a)\big)\ ds' \Big|^2\\
&=C^2\sup_{g\in \mathcal{L}_1}\Big|\!\int\! g(s)\big(T(s'\mid s,a)\!-\!\widehat T(s'\mid s,a)\big)\ ds' \Big|^2\ .\\
\end{aligned}$

It is clear that a function $g$ that maximizes the Kantorovich-Rubinstein dual form:

$\begin{aligned}
& \sup_{g\in \mathcal{L}_1}\int g(s)\big( T(s'\mid s,a)-\widehat T(s'\mid s,a)\big)\ ds'\\
&:=W(T(\cdot|s,a),\widehat{T}(\cdot|s,a)) \ , 
\end{aligned}$

will also maximize:
$$L\big(T,\hat T\big)(s,a)=\Big|\int g(s)\big(T(s'\mid s,a)-\widehat T(s'\mid s,a)\big)\ ds' \Big|^2\ .$$
This is due to the fact that $\forall g\in \mathcal{L}_1 \Rightarrow -g \in \mathcal{L}_1$ and so computing absolute value or squaring the term will not change $\textrm{arg} \max$ in this case.

As a result:
$$L\big(T,\widehat T\big)(s,a)=\Big(C\ W\big(T(\cdot|s,a),\widehat T(\cdot|s,a)\big)\Big)^2\ .$$
This highlights a nice property of Wasserstein, namely that minimizing this metric yields a value-aware model.
\section{Conclusion and Future Work}
We showed that the value function of an MDP is Lipschitz. This result enabled us to draw a connection between value-aware model-based reinforcement learning and the Wassertein metric. 

We hypothesize that the value function is Lipschitz in a more general sense, and so, further investigation of Lipschitz continuity of value functions should be interesting on its own. The second interesting direction relates to design of practical model-learning algorithms that can minimize Wasserstein. Two promising directions are the use of generative adversarial networks \cite{goodfellow2014generative,arjovsky2017wasserstein} or approximations such as entropic regularization \cite{wassApprox}. We leave these two directions for future work.

\bibliography{example_paper}

\begin{thebibliography}{22}
\providecommand{\natexlab}[1]{#1}
\providecommand{\url}[1]{\texttt{#1}}
\expandafter\ifx\csname urlstyle\endcsname\relax
  \providecommand{\doi}[1]{doi: #1}\else
  \providecommand{\doi}{doi: \begingroup \urlstyle{rm}\Url}\fi

\bibitem[Abbeel et~al.(2006)Abbeel, Quigley, and Ng]{abbeel2006using}
Abbeel, Pieter, Quigley, Morgan, and Ng, Andrew~Y.
\newblock Using inaccurate models in reinforcement learning.
\newblock In \emph{Proceedings of the 23rd international conference on Machine
  learning}, pp.\  1--8. ACM, 2006.

\bibitem[Agostini \& Celaya(2010)Agostini and
  Celaya]{agostini2010reinforcement}
Agostini, Alejandro and Celaya, Enric.
\newblock Reinforcement learning with a gaussian mixture model.
\newblock In \emph{Neural Networks (IJCNN), The 2010 International Joint
  Conference on}, pp.\  1--8. IEEE, 2010.

\bibitem[Arjovsky et~al.(2017)Arjovsky, Chintala, and
  Bottou]{arjovsky2017wasserstein}
Arjovsky, Martin, Chintala, Soumith, and Bottou, L{\'e}on.
\newblock Wasserstein generative adversarial networks.
\newblock In \emph{International Conference on Machine Learning}, pp.\
  214--223, 2017.

\bibitem[Asadi \& Littman(2017)Asadi and Littman]{mellowmax}
Asadi, Kavosh and Littman, Michael~L.
\newblock An alternative softmax operator for reinforcement learning.
\newblock In \emph{Proceedings of the 34th International Conference on Machine
  Learning}, pp.\  243--252, 2017.

\bibitem[Asadi et~al.(2018)Asadi, Misra, and Littman]{asadi2018lipschitz}
Asadi, Kavosh, Misra, Dipendra, and Littman, Michael~L.
\newblock Lipschitz continuity in model-based reinforcement learning.
\newblock \emph{arXiv preprint arXiv:1804.07193}, 2018.

\bibitem[Bagnell \& Schneider(2001)Bagnell and
  Schneider]{bagnell2001autonomous}
Bagnell, J~Andrew and Schneider, Jeff~G.
\newblock Autonomous helicopter control using reinforcement learning policy
  search methods.
\newblock In \emph{Robotics and Automation, 2001. Proceedings 2001 ICRA. IEEE
  International Conference on}, volume~2, pp.\  1615--1620. IEEE, 2001.

\bibitem[Bellemare et~al.(2017)Bellemare, Dabney, and
  Munos]{bellemare2017distributional}
Bellemare, Marc~G, Dabney, Will, and Munos, R{\'e}mi.
\newblock A distributional perspective on reinforcement learning.
\newblock In \emph{International Conference on Machine Learning}, pp.\
  449--458, 2017.

\bibitem[Bellman(1957)]{bellman1957markovian}
Bellman, Richard.
\newblock A markovian decision process.
\newblock \emph{Journal of Mathematics and Mechanics}, pp.\  679--684, 1957.

\bibitem[Farahmand et~al.(2017)Farahmand, Barreto, and Nikovski]{farahmand17}
Farahmand, Amir-Massoud, Barreto, Andre, and Nikovski, Daniel.
\newblock {Value-Aware Loss Function for Model-based Reinforcement Learning}.
\newblock In \emph{Proceedings of the 20th International Conference on
  Artificial Intelligence and Statistics}, pp.\  1486--1494, 2017.

\bibitem[Frogner et~al.(2015)Frogner, Zhang, Mobahi, Araya, and
  Poggio]{wassApprox}
Frogner, Charlie, Zhang, Chiyuan, Mobahi, Hossein, Araya, Mauricio, and Poggio,
  Tomaso~A.
\newblock Learning with a wasserstein loss.
\newblock In \emph{Advances in Neural Information Processing Systems}, pp.\
  2053--2061, 2015.

\bibitem[Goodfellow et~al.(2014)Goodfellow, Pouget-Abadie, Mirza, Xu,
  Warde-Farley, Ozair, Courville, and Bengio]{goodfellow2014generative}
Goodfellow, Ian, Pouget-Abadie, Jean, Mirza, Mehdi, Xu, Bing, Warde-Farley,
  David, Ozair, Sherjil, Courville, Aaron, and Bengio, Yoshua.
\newblock Generative adversarial nets.
\newblock In \emph{Advances in neural information processing systems}, pp.\
  2672--2680, 2014.

\bibitem[Kaelbling et~al.(1996)Kaelbling, Littman, and Moore]{KaelblingLM96}
Kaelbling, Leslie~Pack, Littman, Michael~L., and Moore, Andrew~W.
\newblock Reinforcement learning: {A} survey.
\newblock \emph{J. Artif. Intell. Res.}, 4:\penalty0 237--285, 1996.

\bibitem[Kantorovich \& Rubinstein(1958)Kantorovich and
  Rubinstein]{kantorovich1958space}
Kantorovich, Leonid~Vasilevich and Rubinstein, G~Sh.
\newblock On a space of completely additive functions.
\newblock \emph{Vestnik Leningrad. Univ}, 13\penalty0 (7):\penalty0 52--59,
  1958.

\bibitem[Littman \& Szepesv{\'{a}}ri(1996)Littman and
  Szepesv{\'{a}}ri]{littman1996generalized}
Littman, Michael~L. and Szepesv{\'{a}}ri, Csaba.
\newblock A generalized reinforcement-learning model: Convergence and
  applications.
\newblock In \emph{Proceedings of the 13th International Conference on Machine
  Learning}, pp.\  310--318, 1996.

\bibitem[Nachum et~al.(2017)Nachum, Norouzi, Xu, and
  Schuurmans]{nachum2017bridging}
Nachum, Ofir, Norouzi, Mohammad, Xu, Kelvin, and Schuurmans, Dale.
\newblock Bridging the gap between value and policy based reinforcement
  learning.
\newblock \emph{arXiv preprint arXiv:1702.08892}, 2017.

\bibitem[Neu et~al.(2017)Neu, Jonsson, and G{\'o}mez]{neu2017unified}
Neu, Gergely, Jonsson, Anders, and G{\'o}mez, Vicen{\c{c}}.
\newblock A unified view of entropy-regularized {M}arkov decision processes.
\newblock \emph{arXiv preprint arXiv:1705.07798}, 2017.

\bibitem[Ross \& Bagnell(2012)Ross and Bagnell]{RossB12}
Ross, St{\'{e}}phane and Bagnell, Drew.
\newblock Agnostic system identification for model-based reinforcement
  learning.
\newblock In \emph{Proceedings of the 29th International Conference on Machine
  Learning, {ICML} 2012, Edinburgh, Scotland, UK, June 26 - July 1, 2012},
  2012.

\bibitem[Sutton \& Barto(1998)Sutton and Barto]{sutton98}
Sutton, Richard~S. and Barto, Andrew~G.
\newblock \emph{Reinforcement Learning: {A}n Introduction}.
\newblock The MIT Press, 1998.

\bibitem[Talvitie(2014)]{Talvitie14}
Talvitie, Erik.
\newblock Model regularization for stable sample rollouts.
\newblock In \emph{Proceedings of the Thirtieth Conference on Uncertainty in
  Artificial Intelligence, {UAI} 2014, Quebec City, Quebec, Canada, July 23-27,
  2014}, pp.\  780--789, 2014.

\bibitem[Vaserstein(1969)]{vaserstein1969markov}
Vaserstein, Leonid~Nisonovich.
\newblock Markov processes over denumerable products of spaces, describing
  large systems of automata.
\newblock \emph{Problemy Peredachi Informatsii}, 5\penalty0 (3):\penalty0
  64--72, 1969.

\bibitem[Venkatraman et~al.(2015)Venkatraman, Hebert, and
  Bagnell]{venkatraman2015improving}
Venkatraman, Arun, Hebert, Martial, and Bagnell, J~Andrew.
\newblock Improving multi-step prediction of learned time series models.
\newblock In \emph{Proceedings of the Twenty-Ninth {AAAI} Conference on
  Artificial Intelligence, January 25-30, 2015, Austin, Texas, {USA.}}, 2015.

\bibitem[Villani(2008)]{villani2008optimal}
Villani, C{\'e}dric.
\newblock \emph{Optimal transport: old and new}, volume 338.
\newblock Springer Science \& Business Media, 2008.

\end{thebibliography}
\bibliographystyle{icml2018}

\end{document}